\documentclass[journal]{IEEEtran}
\ifCLASSINFOpdf
\usepackage[pdftex]{graphicx}
\else
\fi
\usepackage{amssymb}
\usepackage{siunitx}
\usepackage{physics}
\usepackage{color}
\usepackage[ruled,linesnumbered]{algorithm2e}
\SetKwInput{KwInput}{Input}                
\SetKwInput{KwOutput}{Output}   
\hyphenation{op-tical net-works semi-conduc-tor}
\usepackage{amsmath}
\usepackage{cite}
\usepackage{booktabs}
\usepackage{multirow}
\usepackage{amsthm}
\newtheorem{theorem}{Proposition}
\usepackage{verbatim}

\begin{document}

\title{Tiered Acquisition for Constrained Bayesian Optimization: An Application to Analog Circuits}

\author{Ria Rashid and Abhishek Gupta
\thanks{This work was supported by the Ramanujan Fellowship from the Science and Engineering Research Board, Government of India (Grant No. RJF/2022/000115).}
\thanks{Ria Rashid and Abhishek Gupta (Corresponding author) are with the School of Mechanical Sciences, Indian Institute of Technology Goa, Ponda - 403401, India (e-mail: ria183422005@iitgoa.ac.in, abhishekgupta@iitgoa.ac.in).}}

\markboth{}%
{Shell \MakeLowercase{\textit{et al.}}: Bare Demo of IEEEtran.cls for IEEE Journals}

\maketitle

\begin{abstract}
Analog circuit design can be considered as an optimization problem with the targeted circuit specifications as constraints. When stringent circuit specifications are considered, it is desired to have an optimization methodology that adapts well to heavily constrained search spaces. To this end, we propose a novel Bayesian optimization algorithm with a tiered ensemble of acquisition functions and demonstrate its considerable application potential for analog circuit design automation. Our method is the first to introduce the concept of multiple dominance among acquisition functions, allowing the search for the optimal solutions to be effectively bounded \emph{within} the predicted set of feasible solutions in a constrained search space. This has resulted in a significant reduction in constraint violations by the candidate solutions, leading to better-optimized designs within tight computational budgets. The methodology is validated in gain and area optimization of a two-stage Miller compensated operational amplifier in a \SI{65}{\nano\meter} technology. In comparison to robust baselines and state-of-the-art algorithms, this method reduces constraint violations by up to 38\% and improves the target objective by up to 43\%. The source code of our algorithm is made available at https://github.com/riarashid/TRACE.
\end{abstract}

\begin{IEEEkeywords}
Analog circuit design, acquisition functions, automated sizing, Bayesian optimization, constrained optimization.
\end{IEEEkeywords}

\IEEEpeerreviewmaketitle

\section{Introduction}

\IEEEPARstart{M}{ixed} signal integrated circuits (ICs), which integrate both analog and digital circuits on a single semiconductor die, have become very popular in the present-day electronic industry. The primary obstacle to the rapid design cycles of such mixed-signal integrated circuits is the absence of automation in the analog circuit design process, which still lags behind digital circuit design \cite{11,12}. Analog circuit design can be considered as an optimization problem with the targeted circuit specifications as constraints\cite{11}. Under increasingly stringent specifications, the need for powerful constrained optimization algorithms grows due to the intractability of manually configuring circuits to satisfy all constraints.

Different optimization strategies, including evolutionary algorithms \cite{21,25}, swarm intelligence techniques like particle swarm optimization \cite{8,23} have been extensively used in analog circuit design. More recently, machine learning-based techniques \cite{28,26}, including knowledge transfer \cite{27} and artificial neural networks \cite{20}, have also found application in the design optimization process. The majority of these studies have adopted a simulation-based methodology, wherein circuit performances are viewed as black-box functions and derivative-free optimization algorithms are combined with circuit simulators for optimal designs. Computationally expensive circuit simulations remain a major challenge.

The Bayesian optimization (BO) paradigm offers a principled, general-purpose strategy to tackle such expensive simulation-based optimization problems \cite{2}, with recent applications in automated circuit designs \cite{3,4}. In \cite{1}, a weighted expected improvement-based BO approach with a Gaussian process (GP) surrogate model was proposed for the first time for analog design optimization. The idea of utilizing a multi-objective acquisition function ensemble (MACE) in BO was subsequently introduced \cite{5,6}. Acquisition functions are considered as figures of merit of a candidate solution estimated by the surrogate model. The chosen acquisition ensemble, collectively accounting for circuit optimality as well as the satisfaction of constraints, is formulated into a multi-objective optimization problem whose Pareto set yields solution candidates deemed most worthy of expensive simulation-based evaluation. However, a large ensemble could manifest in a curse of objective dimensionality \cite{31}, with points selected at random from the Pareto set likely to be constraint-violating.

To tackle the curse of dimensionality, this paper proposes to reformulate the multi-objective acquisition ensemble by splitting it into \emph{multiple dominance} levels \cite{7}. Specifically, we introduce a novel \emph{TieRed Acquisition Ensemble} (TRACE) for constrained BO where two levels of dominance have been implemented. The first level is tasked with demarcating solutions that are feasible with high likelihood, whereas the second level focuses on identifying the optimum \emph{within} the feasible set. A pair of constraints-aware novel acquisition functions have been crafted to effectively target the search inside the feasible region of a highly constrained search space. The technique is validated for the design of a two-stage Miller compensated operational amplifier (TSMCOA) in gain and area optimization in \SI{65}{\nano\meter}, demonstrating significant gains in performance compared to state-of-the-art algorithms.
\section{Preliminaries}
\subsection{Constrained Optimization Problem Formulation}
In an analog design optimization problem, the target circuit specifications are considered as the constraints and the parameters of the circuit are considered as the design variables. The objective function given by some performance metric of the circuit is, without loss of generality, to be minimized. The constrained optimization problem can then be framed as:

\begin{equation}
\begin{aligned}
& \underset{\vb*x \in \vb*X}{\min}
&& f(\vb*x) \\
& \text{subject to}
& & c_i(\vb*x) \leq 0, \forall  i \in 1,2,...C
\end{aligned}
\end{equation}
where $f(\vb*x)$ is the objective function, $\vb*x$ represents the design parameters, $c_i(\vb*x)$ is the $i^{\text{th}}$ constraint, and $C$ is the total number of constraints. $\vb*X = \{(\vb*x_1, \vb*x_2, \cdots \vb*x_d)|L_j\leq x_j\leq U_j, j=1,2\cdots d\}$ is the search space, where $d$ is the search space dimensionality, and $L_j$ and $U_j$ are the lower and upper bounds of the $j^{\text{th}}$ decision variable.

\subsection{Bayesian Optimization}
A BO framework has two main elements: (1) probabilistic surrogate models and (2) acquisition functions defined based on the models' predictions. A surrogate model acts as a computationally cheap approximation of the function, which is costly to evaluate. One of the most commonly used surrogate models in BO is the Gaussian process (GP) \cite{21}. Acquisition functions, on the other hand, ascribe figures of merit to unevaluated candidate solutions. In what follows, the GP and various acquisition functions of interest are briefly explained.
\subsubsection{Gaussian Processes (GP)}
A GP is a stochastic approach to regression that extends the concept of multivariate Gaussians to infinite dimensions \cite{32}. It places a Gaussian distribution prior over functions as $f\sim GP(m, k)$ that's completely specified by a mean function, $m(\vb*x)$, which is often set to zero, and a covariance function, $k(\vb*x,\vb*{x'})$, which correlates observations $f(\vb*x)$ and $f(\vb*x')$. If the training dataset is denoted by $D = \{\vb*X, \vb*y\}$, where $\vb*X = \{\vb*x_1, \vb*x_2,\cdots \vb*x_N\}$ is a set of \emph{N} input points, and $\vb*y = [f(\vb*x_1), f(\vb*x_2),\cdots f(\vb*x_N)]^T$ is the corresponding vector of function evaluations, then the posterior predictive mean, $\mu(\vb*x)$, and variance, $\sigma^2(\vb*x)$, of the GP at an unseen data point, $\vb*x$, is given by:
\begin{gather}
    \mu (\vb*x)= k(\vb*x, \vb*X)[K+\sigma^2_n\vb*I_N]^{-1}\vb*y\\
    \sigma^2(\vb*x)= k(\vb*x, \vb*x) - k(\vb*x, \vb*X)[K+\sigma^2_n\vb*I_N]^{-1}k(\vb*X, \vb*x)
\end{gather}
where $k(\vb*x, \vb*X)$ = $k^T(\vb*X, \vb*x)$, $K = k(\vb*X, \vb*X)$ represents the covariance matrix, $\sigma^2_n$ is an additive noise term, and $\vb*I_N$ is a $N\times N$ identity matrix. In this brief, the squared exponential function is selected as the GP covariance kernel. 
\subsubsection{Acquisition Functions}
These serve as figures of merit, which, when optimized, yield solution samples that offer a trade-off between exploration and exploitation of the search space \cite{33}. Three commonly used acquisition functions are the lower confidence bound (LCB) for minimization problems, probability of improvement (PI) and the expected improvement (EI). For a data point, $\vb*x$, their values can be computed based on the predictive mean and variance of the GP as:
\begin{gather}
    LCB(\vb*x) = \mu(\vb*x) - \beta \sigma(\vb*x)\\
    PI(\vb*x)  = \Phi(\lambda)\\
    EI(\vb*x) = \sigma(\vb*x)(\lambda \Phi(\lambda)+\phi(\lambda))
\end{gather}
where $\lambda = \frac{\tau - \epsilon - \mu({\vb*x})}{\sigma({\vb*x})}$. In these definitions, $\beta$ and $\epsilon$ are tunable parameters controlling the extent of exploration, $\tau$ is the best value among the observations in $\vb*y$, and $\Phi(.)$ and $\phi(.)$ are the cumulative distribution function and the probability density function of the standard normal distribution. In this brief, the values of $\beta$ and $\epsilon$ are set as 0.3 and 0.001, respectively.
\subsection{Optimization with Multiple Dominance Levels}
The objective of a standard (single-level) optimization problem is to search for designs that are optimal with respect to a particular binary relation. A binary relation is a comparison of two designs that determines whether one is better than the other, and a design is considered optimal if no better design exists with respect to the relation. For a multi-objective optimization problem, this binary relation is established using concepts of Pareto dominance, leading to a Pareto set of optimal solutions \cite{34}. Given a search space $\vb*X$ in which the binary Pareto dominance relation is symbolized as $\preceq_F$, the Pareto set can be expressed as:
\begin{equation}
\label{eq1}
    \vb*X_{optimal} = min(\vb*X,\preceq_F)
\end{equation}
where $F$ represents the vector of objective functions and the symbol $\preceq$ indicates that the objective functions in $F$ have to be minimized.

In TRACE, we generalize multi-objective optimization to the case of two dominance levels. The basic idea is that the second dominance level, represented by some binary relation $\preceq_{F_2}$, serves to select a subset of solutions from \emph{within} the Pareto set of solutions that are optimal with respect to a first binary relation $\preceq_{F_1}$. We denote the optimized subset as:
\begin{equation}
\label{eq2}
    \tilde{\vb*X}_{optimal} = min\big(min(\vb*X,\preceq_{F_1}),\preceq_{F_2}\big).
\end{equation}
In order to design a multi-dominance optimization algorithm that implements $\preceq_{F_1}$ and $\preceq_{F_2}$ in this nested manner, a single combined binary relation, $\preceq_{F_{1,2}}$, may be defined such that:
\begin{equation}
\label{eq3}
    min(\vb*X, \preceq_{F_{1,2}}) = min\big(min(\vb*X,\preceq_{F_1}),\preceq_{F_2}\big).
\end{equation}
Let both binary relations $\preceq_{F_1}$ and $\preceq_{F_2}$ be a strict partial order, as is necessarily the case for Pareto dominance \cite{34}. Accordingly, let the ranking of a candidate solution, $\vb*x$, under $\preceq_{F_1}$ and $\preceq_{F_2}$ be $R_1(\vb*x)$ and $R_2(\vb*x)$, respectively. A lower rank reflects a better solution under a given binary relation. The combined binary relation, $\preceq_{F_{1,2}}$, can be constructed as:
\begin{multline}
\label{eq4}
    \vb*x \preceq_{F_{1,2}} \vb*x' \iff R_1(\vb*x) < R_1(\vb*x') \cup \\ \big((R_1(\vb*x) = R_1(\vb*x')) \cap (R_2(\vb*x) < R_2(\vb*x'))\big)
\end{multline}
indicating that $\vb*x$ is better than $\vb*x'$ in the multi-dominance setting. Optimizing with multiple dominance levels thus entails finding a set, $\tilde{\vb*X}_{optimal}$, such that for each solution in $\tilde{\vb*X}_{optimal}$, no better solution exists with respect to $\preceq_{F_{1,2}}$.
\section{TRACE}
In a BO iteration, once the GP surrogates are trained and the acquisition functions defined, their optimization yields one or more candidate solutions for evaluation in the next iteration. In TRACE, we propose the ensemble of acquisition functions to be hierarchically organized. Specifically, two levels of dominance have been introduced in the acquisition function optimization stage. In the first level, we coin a pair of novel acquisition functions with the goal of helping bound the search for analog circuits within constraint-satisfying (i.e. feasible) regions of the search space. For a design optimization problem with $C$ constraints and independent GP models trained to predict the values of each constraint function, these acquisition functions are defined as:
\begin{gather}
f_{cv1}(\vb*x) = \text{max} \{\mu_{c1}(\vb*x), \mu_{c2}(\vb*x), \cdots \mu_{cC}(\vb*x)\}\label{eq5}\\
f_{cv2}(\vb*x) = \text{min} \{|\mu_{c1}(\vb*x)|, |\mu_{c2}(\vb*x)|, \cdots |\mu_{cC}(\vb*x)|\} \label{eq6}
\end{gather}
where $\mu_{ci}(\vb*x)$ represents the predictive mean of the GP corresponding to the $i^{\text{th}}$ constraint. The vectorized objective function that translates to a binary Pareto dominance relationship between solutions at the first level, i.e. $\preceq_{F_1}$, is then:
\begin{equation}
\label{eq7}
F_1 = [f_{cv1}(\vb*x), f_{cv2}(\vb*x)].
\end{equation}
Assuming accurate GPs, it can be shown that the Pareto set $min(\vb*X,\preceq_{F1})$ that results from this formulation of $F_1$ is in exact correspondence with the feasible region of $\vb*X$. A proof of this proposition is provided later in this section.

The acquisition functions LCB, PI and EI collectively form the second level of Pareto dominance, i.e. $\preceq_{F_2}$, that acts within the $min(\vb*X,\preceq_{F1})$. The associated vector of objective functions to be minimized is:
\begin{equation}
\label{eq8}
F_2 = [LCB(\vb*x), -PI(\vb*x), -EI(\vb*x)].
\end{equation}

{\SetAlgoNoLine%
 \begin{algorithm}[t]
\DontPrintSemicolon
 \BlankLine
  \KwInput{Initial sample size, $S_{in}$; Evaluation budget, $S_{tot}$}
  \BlankLine
  Generate $S_{in}$ samples of the design variable, $\vb*x$, using Latin hypercube sampling: $x^{(1)}, x^{(2)},...x^{(S_{in})}$\;
  Generate the initial database: $D_{0} \leftarrow \{(\vb*x^{(k)}, f(\vb*x)^{(k)}, c_i(\vb*x)^{(k)} \forall  i \in 1,2,...C)\}_{k=1}^{S_{in}}$\\
  \For{$j\gets1$ \KwTo $S_{tot}-S_{in}$}{
    Build GP models of $f$ and $c_i$ $\forall i$ based on dataset $D_{j-1}$.\;
    Generate the optimal solution set $\tilde{\vb*X}_{optimal}$ w.r.t. $\preceq_{F_{1,2}}$ $\rightarrow$ Refer Alg. 2.\;
    Pick a candidate solution, $\vb*x_j$, from $\tilde{\vb*X}_{optimal}$.\;
    Evaluate $\vb*x_j$ to get $f(\vb*x_j), c_i(\vb*x_j) \forall  i \in 1,2,...C$ \;
    $D_{j} \leftarrow \{D_{j-1}, (\vb*x_j, f(\vb*x_j), c_i(\vb*x_j) \forall  i \in 1,2,...C)\}$\;
    }
\textbf{return:} Best solution found
\caption{Pseudocode of TRACE.}
\label{alg1}
\end{algorithm}}
{\SetAlgoNoLine%
 \begin{algorithm}[t]
\DontPrintSemicolon
 \BlankLine
  \KwInput{Population size; Maximum iteration, $M_{iter}$; trained GPs}
  \BlankLine
  Generate initial solution population randomly in $\vb*X$.\;
   \For{$iter\gets1$ \KwTo $M_{iter}$}{
    For each solution, evaluate $f_{cv1}$, $f_{cv2}$, $LCB$, $PI$, $EI$.\;
    Compute nondomination ranks of all solutions w.r.t. $\preceq_{F_{1}}$ where $F_1$ is given by (\ref{eq7}).\; 
    Compute non-domination ranks of all solutions w.r.t. $\preceq_{F_{2}}$ where $F_2$ is given by (\ref{eq8}).\;
    Determine the multi-dominance rank of solutions w.r.t. $\preceq_{F_{1,2}}$, using the binary relation (\ref{eq4}).\;
    Apply multi-dominance rank-based solution selection and stochastic variation of the selected solutions to generate a new solution population.\;
    }
\textbf{return:} Optimal solution set w.r.t. $\preceq_{F_{1,2}}$
\caption{Generate $\tilde{\vb*X}_{optimal}$.}
\label{alg2}
\end{algorithm}}

A pseudocode of TRACE is given in Algorithms \ref{alg1} and \ref{alg2}. To solve the multi-dominance optimization of acquisition functions (in line 5 of Algorithm \ref{alg1}), candidate solutions first undergo separate ranking with respect to both binary relations, $\preceq_{F1}$ and $\preceq_{F2}$, using the non-dominated sorting method \cite{35}. Using these ranks, the best subset of solutions with respect to the combined binary relation $\preceq_{F_{1,2}}$ is iteratively evolved based on (\ref{eq4}). An outline of the evolutionary process is given in Algorithm \ref{alg2}. Once we arrive at a representative subset of candidate solutions that fulfil $\tilde{\vb*X}_{optimal}$ (returned by Algorithm \ref{alg2}), a single solution is randomly picked from the set for evaluation in the next iteration of TRACE. Note that a larger batch size could also be employed by picking multiple solutions from the set for evaluation. As a particular implementation of the selection and variation operators in line 7 of Algorithm 2, we have used techniques from multi-objective particle swarm optimization (MOPSO) \cite{7}; other preferred evolutionary optimizers may also be used here with little change to the overall algorithmic framework.
\begin{theorem} Let $\vb*x\in \vb*X$ and $\mu_{ci}(\vb*x) = c_i(x)$ $\forall i$. Then, the Pareto set expressed as $min(\vb*X,\preceq_{F1})$, where $ F_1 = [f_{cv1}(\vb*x), f_{cv2}(\vb*x)]$, has a one-to-one correspondence to the entire feasible region.
\end{theorem}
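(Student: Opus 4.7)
The plan is to leverage the hypothesis $\mu_{ci}(\vb*x)=c_i(\vb*x)$ to substitute true constraint values into the definitions of $f_{cv1}$ and $f_{cv2}$, and then study the image of $\vb*X$ under the map $\vb*x \mapsto (f_{cv1}(\vb*x), f_{cv2}(\vb*x))$. The pivotal observation driving the whole argument is that on the feasible region, $c_i(\vb*x)\leq 0$ forces $|c_i(\vb*x)|=-c_i(\vb*x)$, so $\min_i |c_i(\vb*x)| = -\max_i c_i(\vb*x)$; equivalently, $f_{cv2}(\vb*x) = -f_{cv1}(\vb*x)$. Hence every feasible input maps onto the antidiagonal ray $\{(a,-a) : a \leq 0\}$ in the image plane, while every infeasible input has $f_{cv1}(\vb*x)>0$ by definition of the max.

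With this geometry in hand, I would perform two short Pareto checks to place the feasible region inside $min(\vb*X,\preceq_{F_1})$. First, no feasible point can Pareto-dominate another, because any two distinct images on the antidiagonal ray have oppositely ordered coordinates, ruling out a component-wise inequality with a strict case. Second, no infeasible point can dominate any feasible point, since $f_{cv1}$ is strictly positive on infeasible inputs but non-positive on feasible ones, violating the dominance inequality in the first coordinate. Taken together, these two facts show that every feasible design belongs to $min(\vb*X,\preceq_{F_1})$.

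The direction I expect to be the main obstacle is the reverse inclusion: that no infeasible point survives Pareto filtering. My strategy is to exhibit a feasible boundary witness $\vb*z^{*}\in \vb*X$ with $\max_i c_i(\vb*z^{*})=0$; this $\vb*z^{*}$ maps precisely to the origin $(0,0)$ of the image plane and therefore strictly Pareto-dominates every infeasible image $(p,q)$ with $p>0$ and $q\geq 0$, finishing the proof. Existence of $\vb*z^{*}$ is the crux of the argument; it is immediate whenever the feasible region is closed and admits at least one active-constraint boundary point, a mild condition routinely satisfied in analog sizing where optimal designs saturate a specification. If no such boundary point is attainable in $\vb*X$, I would fall back on a closure/limit argument, extracting a sequence of feasible points whose $f_{cv1}$ tends to $0$ and interpreting the claimed one-to-one correspondence in the closure of the feasible set, which is all that is needed for TRACE to concentrate its samples correctly.
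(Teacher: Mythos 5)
Your proof follows essentially the same route as the paper's: the key identity $f_{cv2}(\vb*x) = -f_{cv1}(\vb*x)$ on the feasible set yields mutual non-domination of feasible points, and a feasible boundary point with image $(0,0)$ strictly dominates every infeasible point, whose image satisfies $f_{cv1}>0$ and $f_{cv2}\geq 0$. The only substantive difference is that you explicitly flag, and offer a closure-based fallback for, the existence of the active-constraint witness $\vb*z^{*}$ with $\max_i c_i(\vb*z^{*})=0$, a point the paper's proof simply assumes when it invokes a feasible solution ``on a constraint boundary.''
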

\begin{proof}
  It suffices to show that (a) all feasible solutions are non-dominated with respect to each other, and (b) any infeasible solution must be dominated by some feasible solution. For any $\vb*x\in\vb*X$, $f_{cv1}(\vb*x)$ selects some $\mu_{cp}(\vb*x)$ with maximum constraint function value, while $f_{cv2}(\vb*x)$ selects $\mu_{cq}(\vb*x)$ with minimum absolute constraint function value. If $\vb*x$ is feasible, $\mu_{ci}(\vb*x) \leq 0$ $\forall$ $i$ and therefore $p = q$. This implies $f_{cv1}(\vb*x) = - f_{cv2}(\vb*x)$. 
  
  We first prove part (a). Consider any two feasible solutions, $\vb*x^{arb1}$ and $\vb*x^{arb2}$. If $f_{cv1}(\vb*x^{arb1}) < f_{cv1}(\vb*x^{arb2})$, then $f_{cv2}(\vb*x^{arb1}) > f_{cv2}(\vb*x^{arb2})$. Similarly, if $f_{cv1}(\vb*x^{arb1}) > f_{cv1}(\vb*x^{arb2})$, then $f_{cv2}(\vb*x^{arb1}) < f_{cv2}(\vb*x^{arb2})$. This implies that both $\vb*x^{arb1}$ and $\vb*x^{arb2}$ must be non-dominated with respect to each other.
  
  Next, we prove part (b). For any infeasible solution $\vb*x^{arb}$, $f_{cv1}(\vb*x^{arb}) > 0$ and $f_{cv2}(\vb*x^{arb}) \geq 0$. For any feasible solution, $\vb*x^{arb*}$, that lies on a constraint boundary, $f_{cv1}(\vb*x^{arb*}) = f_{cv2}(\vb*x^{arb*}) = 0$. Hence, $\vb*x^{arb*}$ Pareto dominates $\vb*x^{arb}$ with respect to $F_1$. 
\end{proof}

The proposition makes a theoretical assumption of perfect GP predictions, but this is seldom true in practice. In order to account for GP's predictive errors and uncertainty, the definitions of $f_{cv1}$ and $f_{cv2}$ are therefore slightly modified to promote TRACE's exploration for feasible subspaces. Specifically, in (\ref{eq5}) and (\ref{eq6}), $\mu_{ci}(\vb*x)$ is substituted by $\mu_{ci}(\vb*x)-\alpha \sigma_{ci}(\vb*x)$, where $\sigma_{ci}(\vb*x)$ represents the standard deviation of the GP on the $i^{\text{th}}$ constraint and $\alpha$ is a parameter for controlling exploration. In this study, $\alpha$ is set as 0.2.
\section{Simulation results}
\begin{figure}[!t]
    \centering
    \includegraphics[width=0.32\textwidth]{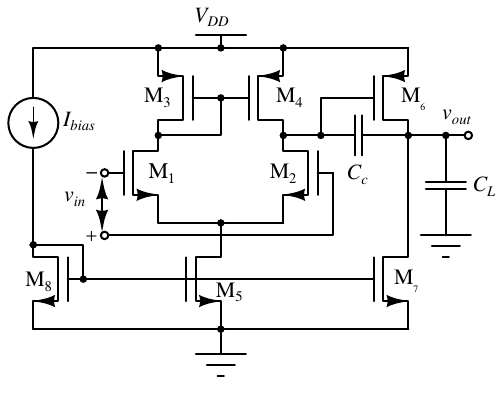}
    \caption{Schematic of the TSMCOA used in our simulations.}
    \label{fig1} 
  \end{figure}
TRACE is validated in a TSMCOA, shown in Fig.~\ref{fig1}, in \SI{65}{\nano\meter} technology with a supply voltage of \SI{1.1}{\volt}. The load capacitance, $C_L$, and compensation capacitance, $C_c$, is taken as \SI{200}{\femto\farad} and \SI{60}{\femto\farad}, respectively. TRACE algorithm is implemented in Python, with Ngspice as the circuit simulator. We examine the effectiveness of TRACE in two different optimization problems, as reported in \cite{8} and \cite{6}. For MOPSO used in the multi-dominance optimization of acquisition functions, the population size and maximum iterations are set as 20 and 100, respectively.

TRACE has been compared with MACE \cite{6} and MPSO \cite{8}. We have also compared the results of TRACE with two different constrained acquisition ensembles of EIPF (which implements a constrained acquisition function of EI$\times$PF) and PIPF (which implements a constrained acquisition function of PI$\times$PF) \cite{s1}. For both the test cases, the design vector, $\vb* x$, consists of 11 design variables, including the lengths and widths of the transistors and the biasing current. The bounds for the design variables are given in Table \ref{table5}.

The corresponding bounds for the design variables are given in Table~\ref{table5}.
   \begin{table}[!t]
    \begin{center}
    \caption{Bounds for the design variables.}
    \label{table5}
    \scalebox{1}{
      \begin{tabular}[!t]{cccc}
        \toprule
        {\bf Test Case} & Width ($W$)&  Length ($L$) & $I_{bias}$\\
        \midrule
        Gain Maximisation&[120, 3600] \si{\nano\meter}&[60, 360] \si{\nano\meter}& [10,100] \si{\micro\ampere}\\
        Area Minimisation&[120, 1200] \si{\nano\meter}&[60, 120] \si{\nano\meter}& [10, 40] \si{\micro\ampere}\\
        \bottomrule
      \end{tabular}}
    \end{center}
  \end{table}
  
In the first test case, we have implemented the gain maximization problem reported in \cite{6} in TSMCOA in \SI{65}{\nano\meter} technology. The evaluation budget is set as 200, with an initial sample size of 20 for all the algorithms. The optimization problem is framed as follows:
\begin{equation*}
\begin{aligned}
& \underset{\vb*x}{\max}
&& \text{Gain }(A_v) \\
& \text{subject to}
& & \text{Phase margin }(PM)\ge\SI{60}{\degree} \\
&&& \text{Unity gain bandwidth }(UGB) \ge\SI{200}{\mega\hertz} \\
\end{aligned}
\end{equation*}

Area minimization of TSMCOA, as reported in \cite{8}, is considered as the second test case. The evaluation budget is set as 300, with an initial sample size of 70 for all the algorithms. The optimization problem is framed as follows:
\begin{equation*}
\begin{aligned}
& \underset{\vb*x}{\text{min}}
&& f(\vb*{x}) = \Sigma_{i=1}^{8}W_i\times L_i\\
& \text{subject to}
& & \text{Voltage gain } (A_v) \ge\SI{20}{\decibel} \\
&&& \text{Slew Rate }(SR) \ge\SI{100}{\volt\per\micro\s} \\
&&& \text{Cut-off frequency }(f_{3dB}) \ge\SI{10}{\mega\hertz} \\
&&& \text{Unity gain bandwidth }(UGB) \ge\SI{100}{\mega\hertz} \\
&&& \text{Phase margin }(PM)\ge\SI{60}{\degree} \\
&&& \text{Noise}(S_n(f))\le\SI{60}{\nano\volt\per\sqrt{Hz}}\text{ at } \SI{1}{\mega\hertz}\\
&&& \SI{0.7}{\volt} \leq \text{Input common mode voltage} \leq \SI{0.8}{\volt}.
\end{aligned}
\end{equation*}
 where $W_i$ and $L_i$ are the width and length of the $i^\text{th}$ transistor.

   \begin{table}[!t]
    \caption{Simulation results of gain and area optimization of TSMCOA.\vspace{-0.4cm}}
    \begin{center}
      \label{table1}
      \centering
      \scalebox{0.9}{
      \begin{tabular}[!t]{lcccc|cccc}
        \toprule
        \multirow{2}{*}{\textbf{Algorithm}} & \multicolumn{4}{c}{{\bf Gain} (\si{\decibel})} & \multicolumn{4}{c}{{\bf Area} (\si{\micro\meter^2})}\\
        \cmidrule{2-9}
        &{\bf Best}&{\bf Worst}& {\bf Mean}& {\bf Std}&{\bf Best}&{\bf Worst}& {\bf Mean}& {\bf Std}\\
        \midrule
        MPSO\cite{8}&13.2&4.2&6.2&4.9&0.2750&0.3250&0.3074&0.03\\
        PIPF\cite{s1}&14.2&3.4&5.2&3.9&0.2549&0.2923&0.2808&0.03\\
        EIPF\cite{s1}&12.2&2.6&8.4&4.4&0.2649&0.3012&0.2921&0.02\\
        MACE\cite{6}& 24.0&10.1&14.7&5.1&0.1925&0.2441&0.2186&0.02\\
        TRACE&41.4&30.6&34.7&3.2&0.1235&0.1897&0.1796&0.02\\
        \bottomrule
      \end{tabular} }
    \end{center}
  \end{table}

  Table~\ref{table1} compares the best, worst, mean, and standard deviation (Std) for $10$ consecutive runs by all the algorithms for both the test cases. The results show that TRACE achieves a better value for gain as well as area compared to all the other algorithms. The mean convergence plots for 10 runs by TRACE and MACE for the two test cases are given in Fig.~\ref{fig2} and Fig.~\ref{fig3}, respectively. TRACE has demonstrated consistent performance in terms of convergence to better optimal designs in the two test cases.
  
  For both the test cases considered in this study, about 99\% of the candidate solutions selected by MPSO, PIPF, and EIPF violated the constraints, with MACE reporting a constraint violation of 89\% and 93\%, respectively. With 61\% and 83\% constraint violations in the two test cases, TRACE showed substantially fewer constraint violations by the candidate solutions compared to other algorithms. This demonstrates the efficacy of TRACE in extremely limited design spaces, which is typically the case in analog circuit design when strict circuit requirements are taken into account. 

  The design parameters and the corresponding circuit specifications of the best solution obtained by TRACE are given in Table~\ref{table2} for gain optimization and in Table~\ref{table3} and Table~\ref{table4} for area optimization. Table~\ref{table4} summarizes the results for the Monte Carlo (MC) simulations for the optimal design for 1000 runs across various PVT corners in the area optimization test case. The MC results validate the performance of the optimal design for TSMCOA obtained by TRACE.
  \begin{table}[!t]
    \begin{center} 
    \caption{Optimum parameters and specifications obtained by TRACE for gain optimization.\vspace{-0.4cm}}
    \label{table2}
    \scalebox{0.9}{
      \begin{tabular}[!t]{ccccccc}
        \toprule
        $I_{bias}$ & $W_{1,2}$ & $W_{3,4}$&  $W_{5,8}$ & $W_6$& $W_7$ & $L_{1,2}$ \\
        \SI{29.8}{\micro\ampere}&\SI{336}{\nano\meter}&\SI{643}{\nano\meter}&\SI{769}{\nano\meter}&\SI{3167}{\nano\meter}&\SI{2508}{\nano\meter} & \SI{190}{\nano\meter}\\
        \midrule
        $L_{3,4}$&  $L_{5,8}$ & $L_6$& $L_7$ & $A_v$&$PM$&$UGB$\\
        \SI{212}{\nano\meter}&\SI{288}{\nano\meter}&\SI{95}{\nano\meter}&\SI{252}{\nano\meter}&\SI{41}{\decibel}&61$^{\circ}$&\SI{268}{\mega\hertz}\\
        \bottomrule
      \end{tabular} }
    \end{center}
  \end{table}
 \begin{table}[!t]
    \begin{center}
    \caption{Optimum parameters obtained by TRACE for area optimization.\vspace{-0.1cm}}
    \label{table3}
    \scalebox{0.95}{
      \begin{tabular}[!t]{ccccccc}
        \toprule
        $I_{bias}$ & $W_{1,2}$ & $W_{3,4}$&  $W_{5,8}$ & $W_6$& $W_7$\\
        \SI{24.1}{\micro\ampere}&\SI{205}{\nano\meter}&\SI{120}{\nano\meter}&\SI{120}{\nano\meter}&\SI{498}{\nano\meter}&\SI{403}{\nano\meter} \\
        \midrule
        $L_{1,2}$ & $L_{3,4}$&  $L_{5,8}$ & $L_6$& $L_7$ &\\
        \SI{74}{\nano\meter}&\SI{93}{\nano\meter}&\SI{70}{\nano\meter}&\SI{60}{\nano\meter}&\SI{60}{\nano\meter}&\\
        \bottomrule
      \end{tabular}}
    \end{center}
  \end{table}
\begin{table}[!t]
    \caption{Specifications obtained for TSMCOA in area optimization.}
    \label{table4}
    \begin{center}
    \scalebox{0.9}{
      \begin{tabular}{lccc}
        \toprule
        \multirow{2}{*}{\textbf{Design Criteria}}&{\bf Specifi-}&\multicolumn{2}{c}{\textbf{MC Results}} \\
        \cline{3-4}
        \textbf{}&{\bf cations}&\textbf{Mean}&\textbf{Std} \\
        \midrule
        $A_v$ (\si{\decibel})&$\geq20$&21.4&2.17\\
        $f_{3dB}$ (\si{\mega\hertz})&$\geq10$&32.6&24.5\\
        $UGB$ (\si{\mega\hertz})&$\geq100$&128.1&50.65\\
        Phase Margin $(^{\circ})$&$\geq60$&62.8&4.40 \\
        $SR$ (\si{\volt\per\micro\second})&$\geq100$&290&53  \\
        $S_n(f)$@1MHz (\si{\nano\volt\per\sqrt{Hz}})&$\leq60$&57&0\\
        $S_n(f)$@10MHz (\si{\nano\volt\per\sqrt{Hz}})&-&20&0\\
        $Power$ (\si{\micro\watt})&-&79&21\\
        $CMRR$ (\si{\decibel})&-&38.6&10.4\\
        $PSRR+$ (\si{\decibel})&-&20.8&2.0\\
        $PSRR-$ (\si{\decibel})&-&45.4&10.7\\
        Settling time with 2\% tol. (\si{\nano\second})&-&5.4&0.82\\ 
        Settling time with 5\% tol. (\si{\nano\second})&-&4.3&0.54\\ 
        $A$ (\si{\micro\meter^{2}})&-& \multicolumn{2}{c}{\textbf{0.1235}} \\
        \bottomrule
      \end{tabular} }
    \end{center}
  \end{table}
     
   \begin{figure}[!t]
    \centering
    \includegraphics[width=0.43\textwidth]{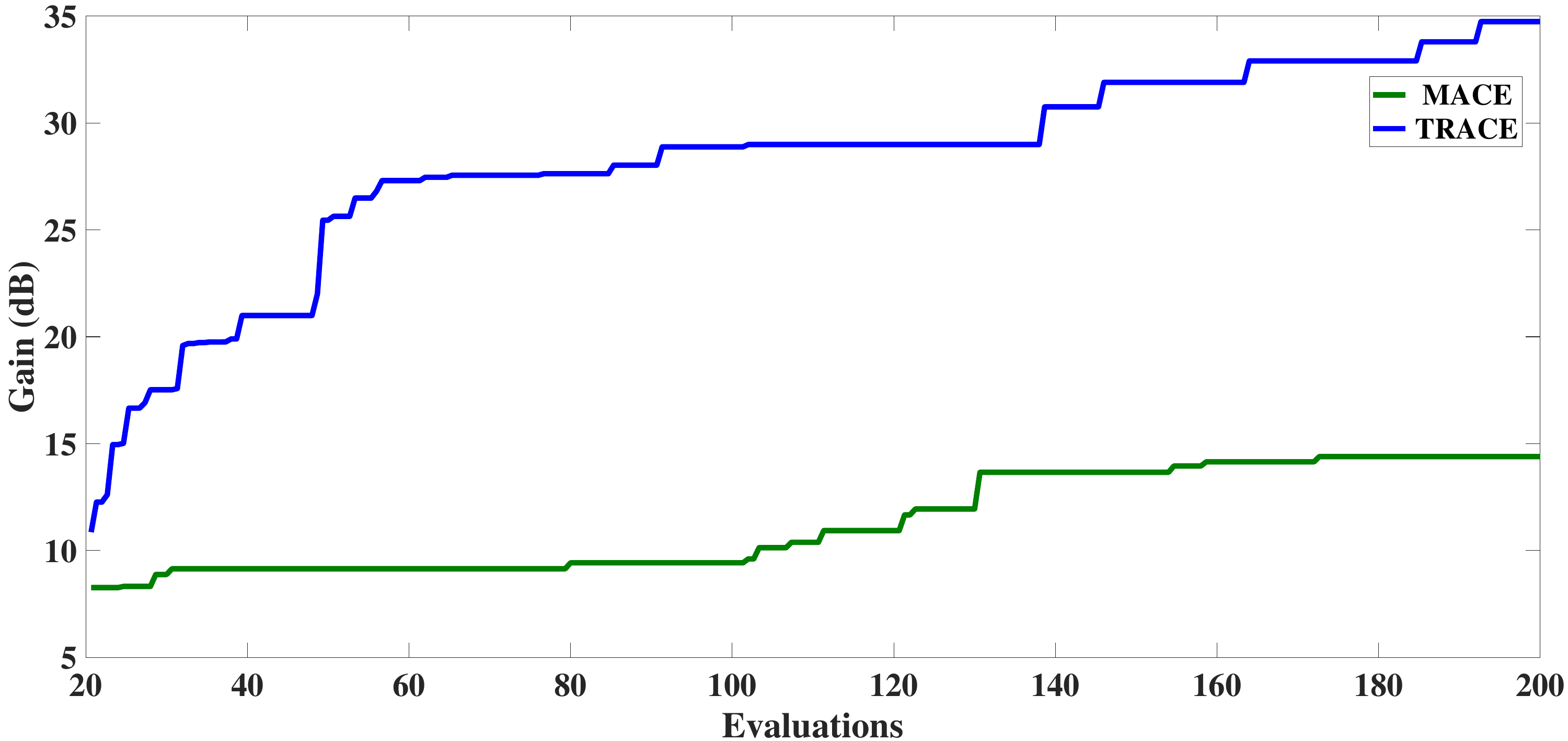}
    \caption{Comparison of mean convergence plots of MACE and TRACE for gain optimization.}
    \label{fig2} 
  \end{figure}
  \begin{figure}[!t]
    \centering
    \includegraphics[width=0.43\textwidth]{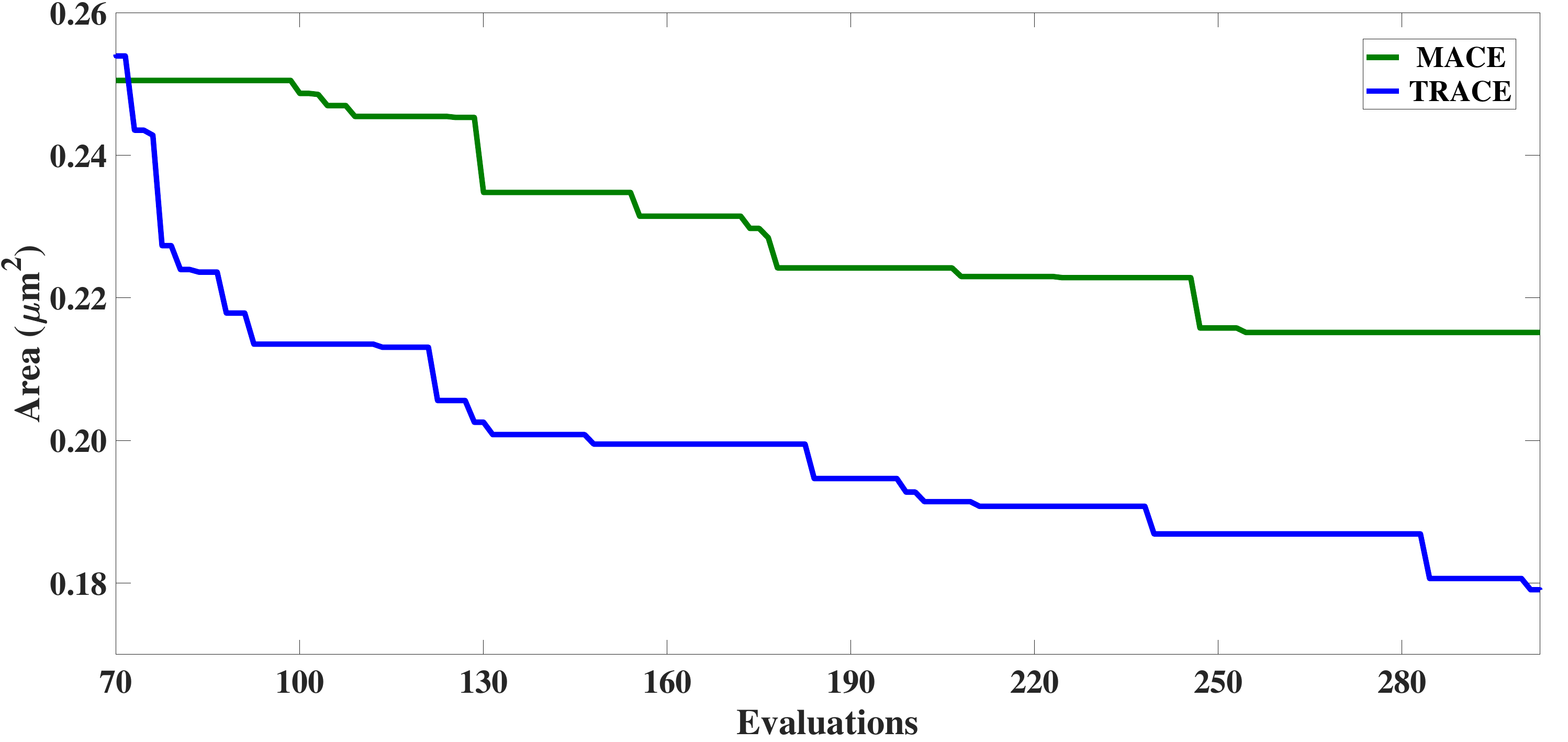}
    \caption{Comparison of mean convergence plots of MACE and TRACE for area optimization.}
    \label{fig3} 
  \end{figure}

  A comparison of TRACE with ANN-PSO reported in \cite{23} for the area optimisation of TSMCOA in \SI{65}{\nano\meter} is also performed. For a fair comparison, TRACE is run with same constraints as reported in \cite{23}. The comparison results are presented in Table~\ref{tables4} and the optimum parameters obtained by TRACE is given in Table~\ref{tables5}. The results show that TRACE is able to achieve a 29\% reduction in area compared to \cite{23}.

\begin{table}[!t]
    \caption{Specifications obtained for TSMCOA in area optimization.}
    \label{tables4}
    \begin{center}
      \begin{tabular}{lccc}
        \toprule
        \textbf{Design Criteria}&{\bf Specs.}&\textbf{ANN-PSO} \cite{23}&\textbf{This work} \\
        \midrule
        $A_v$ (\si{\decibel})&$\geq20$&20.2&21.5\\
        $UGB$ (\si{\mega\hertz})&$\geq100$&106.4&107.4\\
        Phase Margin $(^{\circ})$&$\geq60$&61&63.4 \\
        $SR$ (\si{\volt\per\micro\second})&$\geq100$&176&221\\
        $Power$ (\si{\micro\watt})&-&79&94\\
        $CMRR$ (\si{\decibel})&-&30.3&27.1\\
        $PSRR+$ (\si{\decibel})&-&15.64&11.0\\
        $A$ (\si{\micro\meter^{2}})&-&{\textbf{0.121}}&{\textbf{0.086}} \\
        \midrule
      \end{tabular} 
    \end{center}
  \end{table}

   \begin{table}[!t]
    \begin{center}
    \caption{Optimum parameters obtained by TRACE for area optimization with the same consrtaints as reported in \cite{23}.}
    \label{tables5}
    \scalebox{0.95}{
      \begin{tabular}[!t]{ccccccc}
        \toprule
        $I_{bias}$ & $W_{1,2}$ & $W_{3,4}$&  $W_{5,8}$ & $W_6$& $W_7$ & $L_{1-8}$\\
        \SI{30.8}{\micro\ampere}&\SI{120}{\nano\meter}&\SI{120}{\nano\meter}&\SI{120}{\nano\meter}&\SI{588}{\nano\meter}&\SI{120}{\nano\meter} & \SI{60}{\nano\meter}\\
        \bottomrule
      \end{tabular}}
    \end{center}
  \end{table}

  \section{Test Functions}
TRACE algorithm has been validated using three analytical benchmark functions: test function 1 \cite{s1}, test function 2 \cite{s3} and branin function \cite{s2}. TRACE is compared with PSO, EIPF, PIPF and MACE. In all the cases, TRACE has performed significantly better in terms of fewer constraint violations and better accuracy. For all the test cases, the best, worst, mean, and standard deviation along with the average percentage of constraint violations (CV) by the candidate solutions for ten consecutive runs are given in Tables \ref{tables1}, \ref{tables2} and \ref{tables3}.

\subsection{Test Function 1:}

\begin{equation*}
\begin{aligned}
& \underset{\vb*x}{\min}
&& f(x) = \text{cos}(2x_1)\text{cos}(x_2)+\text{sin}(x)_1 \\
& \text{subject to}\\
&&& \text{cos}(x_1)\text{cos}(x_2)-\text{sin}(x_1)\text{sin}(x_2)-0.5\leq0\\
&&& x_1\;\text{,}\;x_2\;\in\;[0,6]
\end{aligned}
\end{equation*}

  \begin{table}[hbt!]
    \caption{Simulation results of Test Function 1.}
    \begin{center}
      \label{tables1}
      \centering
      \scalebox{1}{
      \begin{tabular}[]{lccccc}
        \toprule
        \bf Algorithm &{\bf Best}&{\bf Worst}& {\bf Mean}& {\bf Std}& {\bf CV (\%)}\\
        \midrule
        PSO&-1.9994&-1.1223&-1.7209&0.4130&39\\
        PIPF&-1.9951&-1.7704&-1.9431&0.0632&29\\
        EIPF&-1.9997&-1.6850&-1.9090&0.1144&33\\
        MACE&-1.9985&-1.9507&-1.9792&0.0213&13\\
        TRACE&-1.9990&-1.9807&-1.9917&0.0067&11\\
        \bottomrule
      \end{tabular}}
      \vspace{0.1cm}
      \footnotesize{\\Evaluation budget = 50, Initial sample size = 10}
    \end{center}
  \end{table}

\subsection{Test Function 2:}

\begin{equation*}
\begin{aligned}
& \underset{\vb*x}{\max}
&& f(x) = (x_1-1)^2 + (x_2-0.5)^2 \\
& \text{subject to}\\
&&& \big((x_1-3)^2+(x_2+2)^2\big)e^{-x_2^7}-12\leq0\\
&&& 10x_1+x_2-7\leq0\\
&&& (x_1-0.5)^2+(x_2-0.5)^2-0.2\leq0\\
&&& x_1\;\text{,}\;x_2\;\;\in\;[0,1]\\
\end{aligned}
\end{equation*}

  \begin{table}[hbt!]
    \caption{Simulation results of Test Function 2.}
    \begin{center}
      \label{tables2}
      \centering
      \scalebox{1}{
      \begin{tabular}[hbt!]{lccccc}
        \toprule
        \bf Algorithm &{\bf Best}&{\bf Worst}& {\bf Mean}& {\bf Std}& {\bf CV (\%)}\\
        \midrule
        PSO&-&-&-&-&100\\
        PIPF&-&-&-&-&100\\
        EIPF&-&-&-&-&100\\
        MACE&0.6858&0.6286&0.6639&0.0193&57\\
        TRACE&0.7332&0.6437&0.6976&0.0312&43\\
        \bottomrule
      \end{tabular}}
      \vspace{0.1cm}
      \footnotesize{\\Evaluation budget = 160, Initial sample size = 30}
    \end{center}
  \end{table}

\subsection{Branin Function:}

\begin{equation*}
\begin{aligned}
& \underset{\vb*x}{\max}
&& f(x) = (x_1-10)^2 + (x_2-15)^2 \\
& \text{subject to}\\
&&& (x_2-\frac{5.1}{4\pi^2}x_1^2+\frac{5}{\pi}x_1-6)^2 +\\
&&& \;\;\;\;\;\;\;\;\;\;\;\;10\big(1-\frac{8}{\pi}\big)\text{cos}(x_1)+5\leq0\\
&&& x_1\;\in\;[-5,10]\\
&&& x_2\;\in\;[0,15]\\
\end{aligned}
\end{equation*}

  \begin{table}[hbt!]
    \caption{Simulation results of Branin Function.}
    \begin{center}
      \label{tables3}
      \centering
      \scalebox{1}{
      \begin{tabular}[hbt!]{lcccccc}
        \toprule
        \bf Algorithm &{\bf Best}&{\bf Worst}& {\bf Mean}& {\bf Std}& {\bf CV (\%)}\\
        \midrule
        PSO&268.8&188.9&239.1&34.3&80\\
        PIPF&-&-&-&-&100\\
        EIPF&-&-&-&-&100\\
        MACE&266.6&206.7&226.5&20.9&34\\
        TRACE&266.9&225.1&255.9&16.1&30\\
        \bottomrule
      \end{tabular}}
      \vspace{0.1cm}
      \footnotesize{\\Evaluation budget = 200, Initial sample size = 30}
    \end{center}
  \end{table}

\section{Conclusion}
Solutions recommended by existing Bayesian optimization (BO) algorithms, when applied to analog circuits, often fail to satisfy tight circuit specifications. In this paper, a novel BO with a tiered acquisition function ensemble (TRACE) is proposed to address this issue. TRACE is unique in the sense that it provably prioritizes the discovery of a feasible solution set (see Proposition 1), within which the true optimum can be more effectively located. The methodology has been validated in a two-stage Miller compensated operational amplifier in area and gain optimization in a \SI{65}{\nano\meter} technology, with up to 43\% and 33\% improvement in values of gain and area, respectively, when compared with the state-of-the-art algorithms, with significantly lesser constraint violations. This demonstrates the considerable application potential of the proposed approach in the automation of analog circuit design.

\bibliographystyle{IEEEtran}
\bibliography{references}
\end{document}